
\documentclass[letterpaper, 10 pt, conference]{ieeeconf}  

\IEEEoverridecommandlockouts                              

\overrideIEEEmargins                                      



\usepackage{float,amsmath,amsfonts, color,graphicx}

\usepackage{amssymb, amsthm}
\usepackage{calc}
\usepackage{tikz}
\usepackage{pgfplots}
\usepackage{cases}
\usepackage{url}
\usepackage[ruled, vlined, linesnumbered]{algorithm2e}
\usepackage{amsthm}
\usepackage{array}
\usepackage{tabularx}
\usepackage{diagbox}
\usepackage{subcaption}

\title{\LARGE \bf
Synthesis of Control Barrier Functions Using a Supervised Machine Learning Approach
}

\author{Mohit Srinivasan$^{1}$, Amogh Dabholkar$^{2}$, Samuel Coogan$^{3}$, and Patricio Vela$^{4}$%
\thanks{This work was supported in part by the National Science Foundation under Grant Nos. \#1836932, S\&AS 1849333, and in part by DARPA PAI}%
\thanks{$^{1}$Mohit Srinivasan, $^{3}$Samuel Coogan and $^{4}$Patricio Vela are with the School of Electrical and Computer Engineering,
        Georgia Institute of Technology, Atlanta, USA
        {\tt\small mohit.srinivasan@gatech.edu; sam.coogan@gatech.edu; pvela@gatech.edu}}
\thanks{$^{2}$Amogh Dabholkar is with the Department of Electrical and Electronics Engineering, Birla Institute of Technology and Science (BITS), Pilani - K. K. Birla Goa Campus, India
        {\tt\small adabholkar6@gatech.edu}}
\thanks{$^{3}$Samuel Coogan is also with the School of Civil and Environmental Engineering, Georgia Institute of Technology, Atlanta, USA}}

\theoremstyle{plain}
\newtheorem{theorem}{Theorem}
\newtheorem{prop}{Proposition}
\newtheorem{probstat}{Problem Statement}

\theoremstyle{definition}

\newtheorem{definition}{Definition}

\DeclareMathAlphabet{\mathcal}{OMS}{cmsy}{m}{n}

\newcommand{\kfun}{k_{\phi}}

\begin{document}

\maketitle
\thispagestyle{empty}
\pagestyle{empty}

\begin{abstract}
Control barrier functions are mathematical constructs used to guarantee safety for robotic systems. When integrated as constraints in a quadratic programming optimization problem, instantaneous control synthesis with real-time performance demands can be achieved for robotics applications.  Prevailing use has assumed full knowledge of the safety barrier functions, however there are cases where the safe regions must be estimated online from sensor measurements. In these cases, the corresponding barrier function must be synthesized online.  This paper describes a learning framework for estimating control barrier functions from sensor data.  Doing so affords system operation in unknown state space regions without compromising safety. Here, a support vector machine classifier provides the barrier function specification as determined by sets of safe and unsafe states obtained from sensor measurements.  Theoretical safety guarantees are provided. Experimental ROS-based simulation results for an omnidirectional robot equipped with LiDAR demonstrate safe operation.

\end{abstract}

\section{INTRODUCTION}
\label{sec:intro}
Autonomous vehicles \cite{autonomous_vehicles}, industrial robots, and multi-robot systems \cite{mrs_safety} deployed in uncertain domains are often tasked to respect safety-critical constraints while advancing a given task \cite{safety_critical}.  When operating in unknown and dynamic environments with insufficient advanced information regarding the workspace, controllers which translate sensory information from the environment into \emph{safe} control actions are of paramount importance.  Control barrier functions (CBFs) are level-set functions used to provide formal safety guarantees for controlled dynamical systems.  Given a possibly unsafe nominal control policy, barrier function based quadratic programs (QPs) generate a \emph{safe} control action at each time instant. The control actions are minimally invasive in the sense that the nominal control policy is modified only when it will result in violation of a safety constraint.  Barrier function based real-time controllers in robotics support collision avoidance for multi-robot motion \cite{Li_multi_TRO}, task allocation for robotic swarms \cite{notomista2019optimal}, and motion planning \cite{mohit_cdc18}. 

A key assumption commonly imposed is that the robotic system has complete knowledge of the unsafe state space regions. Leveraging the knowledge translates to formal safety guarantees arising from its translation to CBFs.  In practice, this assumption need not hold and limits more widespread application of barrier functions. As a motivating example, consider an autonomous robot operating in an environment for which it has no knowledge of the obstacle boundaries. If these boundaries are to be as level-sets of {smooth functions}, the process of finding closed-form barrier functions for these obstacles is not straightforward.  Without the functions, one cannot leverage the safety guarantees that CBFs provide. Thus, this paper describes a support vector machine (SVM) approach to CBF synthesis from sensor measurements.  In particular, sensory information obtained from the environment defines the set of safe and unsafe samples and is used for training the SVM classifier.

Learning algorithms, or data-driven synthesis methods, for ensuring safety have been explored in several contexts. The most prevalent has been to establish stable state space regions meeting safety specifications by identifying a control Lyapunov function (CLFs) compatible with given CBFs. Techniques for doing so include sum-of-squares (SOS) methods \cite{Li_SOS} and neural network designs \cite{richards2018lyapunov}, with the aim of identifying the largest possible stable region within the safe region. When attempting to learn baseline control policies for a given task, reinforcement learning methods cannot guarantee safety as the exploration process demands executing unsafe control inputs.  Employing pre-specified barrier functions during the action policy exploration and keeping track of the safety interventions to influence the explored policies, can provide the necessary safety guarantees \cite{cheng2019RL_barriers}.  Investigations more closely aligned with barrier function synthesis using tools from machine learning include the use of kernel machines \cite{cristianini2000introduction} to synthesize occupancy map functions for navigation and planning purposes \cite{RaOt_HilbertMaps,FrOtRa_PathPlanHSOM}. Occupancy map level-sets can distinguish safe and unsafe regions.  This potential use was further explored in the context of perceptron algorithms, where the resulting classifier function provided a mechanism to synthesize non-colliding trajectories through space \cite{yip_kernel}.  Emphasis was on improving the run-time of global mapping relative to existing kernel machine methods.  Our aim is to explore how these machine learning constructs can be used to synthesize CBFs in a manner that the learned function provides the necessary safety guarantees.

The contributions of this work are as follows: First, we present a SVM approach for the synthesis of a barrier function from a training dataset consisting of \emph{safe} and \emph{unsafe} samples obtained from sensor measurements. We describe offline and online training methods.  Second, a formal guarantee on correct classification of unsafe regions is provided for both the methods. We show that in the offline method, the system is rendered safe for an under-approximated (conservative) safe set.  A similar guarantee holds locally in the online approach.  The proposed framework is implemented in a ROS-based simulator with a LiDAR equipped omnidirectional robot. Evaluation metrics for the trajectories generated by the proposed CBF synthesis framework quantify how well they match the ideal case where the CBF is known. To the best of our knowledge, this is the first paper addressing the problem of CBF synthesis from sensed environmental data.

This paper is organized as follows: Section~\ref{sec:math_bkg} reviews control barrier functions, their safety properties, and their use in QP-based control.  Section~\ref{sec:prob_stat} describes the problem addressed.  Section~\ref{sec:main_results} covers the main results of the CBF synthesis framework, for both the offline and online versions.  Section~\ref{sec:case_studies} covers implementation scenarios from a motion planning perspective along with evaluation metrics for comparing the generated trajectories with ground truth data.  Section~\ref{sec:conclusion} provides concluding remarks.

\section{MATHEMATICAL BACKGROUND}
\label{sec:math_bkg}
This section summarizes the concept of control barrier functions and the
formal safety guarantees they provide. To begin, consider an affine control robotic system: 
\begin{equation} \label{eq:system}
    \dot{x} = f(x) + g(x)u\,, \quad 
      x \in \mathcal{D} \subset \mathbb{R}^{n},\ 
      u \in \mathbb{R}^{m},
\end{equation}
where $x$ is the state of the robot, $u$ is the control input, and $x(0) = x_0$. Both
$f : \mathcal{D} \rightarrow \mathbb{R}^{n}$ and 
$g : \mathcal{D} \rightarrow \mathbb{R}^{n \times m}$ 
are locally Lipschitz continuous vector fields.

Consider further that the system has a set of safe states $\mathcal{C} = \{ x \in \mathcal{D} \mid h(x) \geq 0\,\text{ and } h \in C^1(\mathcal{D};\mathbb{R}) \}$ given by the super zero level-set of the function $h$.  The boundary of the safe set is the zero level-set, $\partial \mathcal{C} = \{ x \in \mathcal{D} \mid h(x) = 0 \}$.  During controlled evolution, the system \eqref{eq:system} is considered to be \emph{safe} if for all $t \geq 0$, $x(t) \in \mathcal{C}$ when $x(0) \in \mathcal{C}$. As detailed in \cite{CBFs_tutorial}, zeroing control barrier functions (ZCBFs) can be used to guarantee safety of the system. To define ZCBFs, we first define an extended class $\mathcal{K}$ function $\alpha : \mathbb{R} \rightarrow \mathbb{R}$ as a function that is strictly increasing and $\alpha(0) = 0$. 

\begin{definition}
\label{def:zcbf}
The function $h \in C^1(\mathcal{D};\mathbb{R})$ is a Zeroing Control Barrier Function (ZCBF) if there exists a locally Lipschitz extended class $\mathcal{K}$ function $\alpha$ such that for all $x \in \mathcal{D}$
\begin{equation*}
    \sup_{u \in \mathbb{R}^{m}}\bigg\{ L_{f} h(x) + L_{g} h(x) u(x) + \alpha(h(x)) \bigg\} \geq 0 \,,
\end{equation*}
for the Lie derivatives
$L_{f} h(x) = \frac{\partial h(x)}{\partial x} f(x)$ and 
$L_{g} h(x) = \frac{\partial h(x)}{\partial x} g(x)$ 
of $h$ in the direction of the vector fields $f$ and $g$.
\end{definition}

Define the state-dependent set of control inputs $\mathcal U(x)$,
\begin{equation} \label{eq:zcbf_controls}
    \mathcal{U}(x) \equiv \bigg\{ u \in \mathbb{R}^{m} \mid L_{f}h(x) + L_{g}h(x)u(x) + \alpha(h(x)) \geq 0 \bigg\} \ .
\end{equation}
Safety of the system can then be guaranteed under the action of a suitable control input $u(x) \in \mathcal{U}(x)$ for all $x \in \mathcal{D}$, formalized by the following theorem: 
\begin{theorem} \label{theorem:zcbf} \cite{CBFs_tutorial} 
  Let there be a safe set 
  $\mathcal{C} = \{ x \in \mathcal{D} \mid h(x) \geq 0 \text{ and } h \in
  C^1(\mathcal{D};\mathbb{R})\}$ specified for the affine control system
  \eqref{eq:system}. If $h$ is a ZCBF, then any control input
  $u \in C(\mathcal{D};\mathbb{R}^{m})$ where $u(x) \in
  \mathcal{U}(x)$ for all $x \in \mathcal{D}$ renders the set
  $\mathcal{C}$ forward invariant. That is, $x(t) \in \mathcal{C}$ 
  for all $t \geq 0$.
\end{theorem}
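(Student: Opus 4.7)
The plan is to convert the pointwise ZCBF inequality into a scalar differential inequality along closed-loop trajectories, then apply a comparison principle to conclude that $h(x(t))$ remains nonnegative. First, fix any $x_0 \in \mathcal{C}$ and let $x(t)$ denote the trajectory of \eqref{eq:system} under the control $u$; existence of such a trajectory on some maximal interval follows from local Lipschitz continuity of $f$, $g$ and continuity of $u$. Since $h \in C^1(\mathcal{D};\mathbb{R})$, the chain rule gives
\begin{equation*}
    \dot{h}(x(t)) = L_{f} h(x(t)) + L_{g} h(x(t)) u(x(t)).
\end{equation*}
By the hypothesis $u(x(t)) \in \mathcal{U}(x(t))$ and the definition \eqref{eq:zcbf_controls}, this yields the scalar differential inequality
\begin{equation*}
    \dot{h}(x(t)) \geq -\alpha(h(x(t))).
\end{equation*}

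Next, introduce the comparison ODE $\dot{y} = -\alpha(y)$ with initial condition $y(0) = h(x_0) \geq 0$. Because $\alpha$ is locally Lipschitz and satisfies $\alpha(0) = 0$, the origin is an equilibrium of this scalar system; uniqueness of solutions through the equilibrium then guarantees $y(t) \geq 0$ for all $t$ in its maximal interval of existence. Applying the standard comparison lemma (see, e.g., Khalil, Lemma 3.4) to the inequality $\dot{h}(x(t)) \geq -\alpha(h(x(t)))$ and $\dot{y}(t) = -\alpha(y(t))$ with $h(x(0)) = y(0)$, we obtain $h(x(t)) \geq y(t) \geq 0$ for all $t \geq 0$ in the common interval of existence.

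Finally, I would argue that this inequality prevents finite escape from $\mathcal{C}$, so the trajectory remains in $\mathcal{C} \subset \mathcal{D}$ and can be extended for all $t \geq 0$ by standard arguments. Concluding that $x(t) \in \mathcal{C}$ for all $t \geq 0$ establishes forward invariance of $\mathcal{C}$.

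The main obstacle is the comparison step: one must check that $\alpha$ being \emph{locally} Lipschitz (rather than globally) and only strictly increasing through zero is enough to guarantee uniqueness of the trivial solution $y \equiv 0$ and to deploy the comparison lemma. This is precisely why Definition~\ref{def:zcbf} requires the extended class $\mathcal{K}$ function $\alpha$ to be locally Lipschitz; without this regularity, one could have multiple scalar solutions and the comparison bound would fail. Everything else is routine differentiation and invocation of standard ODE results.
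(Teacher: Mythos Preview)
The paper does not supply a proof of this theorem; it is quoted verbatim from the cited reference \cite{CBFs_tutorial} and used as background. Your comparison-lemma argument is precisely the standard proof given in that reference (and in the broader CBF literature), so in that sense you have reproduced the intended reasoning.

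One minor remark: your final paragraph conflates two issues. Showing $h(x(t)) \geq 0$ on the maximal interval of existence does establish that the trajectory cannot leave $\mathcal{C}$ while it is defined, but it does not by itself rule out finite escape time \emph{within} $\mathcal{C}$ when $\mathcal{C}$ is unbounded. Forward completeness of the closed loop is typically a separate standing assumption (or follows from additional structure on $f$, $g$, and $u$), and the cited tutorial treats it that way. Your proof of forward invariance on the maximal interval is correct; just be careful not to claim global existence as a consequence of the barrier inequality alone.
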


The constraint \eqref{eq:zcbf_controls} arising from a ZCBF $h$ 
is affine in the control input $u$.
and hence can be encoded as a quadratic
program (QP) constraint in $u$.  For fixed $x \in \mathcal{D}$, the
requirement $u \in \mathcal{U}(x)$ becomes a linear constraint for
the following point-wise in time, minimum norm QP:
\begin{equation}
\begin{aligned} \label{intro_qp}
  & \underset{u \in \mathbb{R}^{m}}{\text{minimize}}
  \quad ||u - k(x)||_{2}^{2} \\
  & \quad \text{s.t \quad \quad \quad} u \in \mathcal{U}(x) \ ,
\end{aligned}
\end{equation}
where $k : \mathcal{D} \rightarrow \mathbb{R}$ is a user-defined nominal control policy. This QP (a) results in a control input for following the nominal policy while simultaneously guaranteeing safety, and (b) is amenable to efficient online computation. 

\section{PROBLEM STATEMENT}
\label{sec:prob_stat}
Consider an affine control robotic system as in~\eqref{eq:system} evolving in $\mathcal{D} \subset \mathbb{R}^{2}$ and equipped with LiDAR sensors that provide depth information.  By virtue of the depth measurement vector $z_{t} \in \mathbb{R}_{>0}^{N}$ at time $t$, where $N$ is the total number of samples, the robot can detect unsafe state space regions.
Regarding the LiDAR sensor, denote by $\theta_{\text{res}}$ the angular resolution (increment angle) of the measurements. This is the angle between two consecutive light rays emitted from the sensor. We make the following assumption in order to account for spatial variations in the nature of the workspace: assume that the resolution of the LiDAR sensor is high enough to capture the spatial profile of the environment from a given offset distance, i.e., the LiDAR has a sufficiently small increment angle $\theta_{\text{res}}$.  Sensors such as the ones from Velodyne \cite{velodyne} with increment angles as small as $0.08^{\circ}$ are capable of satisfying the above assumption. Hence, it is reasonable to assume such sensor resolution capabilities.

Let $k \in C(\mathcal{D};\mathbb{R}^{m})$ be a user-defined nominal feedback control policy to be followed by the robot. Examples of such policies include proportional (go-to-goal) control 
or MPC based policies \cite{gurriet_active_set_invariance}. 
The state space is assumed to contain unknown unsafe regions.  That is, there exist $p$ unsafe sets in the state space defined as $\mathcal{C}_{i} = \{ x \in \mathcal{D} \mid h_{i}(x) \leq 0 \, h_{i} \in C(\mathcal{D};\mathbb{R})\}$ for all $i \in \{ 1,2,\ldots, p\}$, such that $h_i$ are unknown ZCBFs. The safe region is $\mathcal{D} \backslash \cup_{i=1}^{p}\!\mathcal{C}_{i}$.

Since there is no \textit{a priori} knowledge of the unsafe sets, data
obtained from the LiDAR sensor must be used to synthesize the
unknown barrier functions $h_{i}: \mathcal{D} \rightarrow \mathbb{R}$,
$i \in \{ 1,2,\ldots,p\}$, to render the system safe while minimally
deviating from the nominal feedback policy $k$. In conjunction with the
robot's state, the measurements obtained from the on-board depth sensors
provide the location of points on the boundary of the unsafe sets, and
hence are points $x \in \mathcal{D}$ for which $h(x) = 0$.  
To learn the unsafe regions and follow the nominal policy safely, a
framework for the synthesis of barrier functions is required with
guarantees on safety of the system, as formalized by the problem
statement:

\begin{probstat}
Consider the affine control robotic system in \eqref{eq:system} and the unsafe sets $\mathcal{C}_{i} \subset \mathcal{D}$, $i \in \{ 1,2,\ldots, p\}$.  Given the nominal feedback control policy $k : \mathcal{D} \rightarrow \mathbb{R}$ and LiDAR measurements $z_{t}$ obtained at any time instant $t \geq 0$, formulate a barrier function synthesis framework which either
\begin{enumerate}
    \item Learns the unsafe region $\bigcup_{i=1}^{p}\!\mathcal{C}_{i}$ offline given a dataset of safe and unsafe samples from the domain, or
    \item Learns the unsafe region online using instantaneous measurements $z_t$, as the system traverses the domain.
\end{enumerate}
\end{probstat}

\section{CONTROL BARRIER FUNCTION SYNTHESIS FRAMEWORK}
\label{sec:main_results}
This section describes the methodology for obtaining the training dataset, the control barrier function synthesis framework, and two QP based approaches which utilize the synthesized barrier function to guarantee safety.

\subsection{Support Vector Machines}
The learning approach to be used for barrier function specification via-a-vis the unsafe regions will be support vector machine (SVMs), namely kernel SVMs \cite{cristianini2000introduction}.  Suppose a dataset $\mathcal{T} = \{ (x_{1}, y_{1}), (x_{2}, y_{2}), \ldots, (x_{N}, y_{N})\}$ is provided where $x_{i} \in \mathbb{R}^{n}$ is an $n$ dimensional vector and $y \in \mathcal{Y} = \{ -1, 1\}$ is a label associated with the vector $x_i$ for all $i \in \{ 1,2,\ldots,N\}$.  Using the dataset, the linear SVM algorithm determines an affine decision boundary function $\widehat{f}(w^{T} x + b)$, where $x \in \mathbb{R}^{n}$ is a training sample, $w \in \mathbb{R}^{n}$ are coefficients and $b \in \mathbb{R}$ is a bias term, which translates the sample $x$ into a corresponding label $y \in \mathcal{Y}$ that belongs to one of the two classes i.e. $+1$ or $-1$. When the data is not separable by a hyperplane in the native space, a non-linear mapping transforming the data into a higher dimensional space with better separability properties may be used. This paper makes use of such a mapping, via a kernel function, to facilitate separation of unsafe obstacle regions from safe regions.

Since the domain $\mathcal{D}$ consists of states which are either safe or unsafe, their separation can be cast as a binary SVM classification problem. However, it is imperative that unsafe states be classified as unsafe, whereas all the safe states need not strictly be classified as safe. To that end, we consider the non-linear, biased-penalty SVM optimization problem \cite{veropoulos1999controlling}:
\begin{align} \label{eq:SVM_unbalanced_hardMargin}
\underset{w}{\text{minimize}} \quad 
& \frac{1}{2}||w||_{2}^{2} + C^{+} \sum\limits_{i \mid y_i = +1}^{N}\xi_{i} + C^{-}\sum\limits_{j \mid y_j = -1}^{N}\xi_{j} \nonumber \\
\text{s.t \quad} & y_{i} \cdot (w \phi(x_{i}) + b) \geq 1 - \xi_{i} \nonumber \\
& \xi_{i} \geq 0 \text{, for all } i \in \{ 1,2,\ldots, N\} \ ,
\end{align}
where $C^{+}, C^{-}> 0$ are constants penalizing misclassification of the positive and negative samples, and $\phi : \mathbb{R}^{n} \rightarrow \mathbb{R}^{d}$ is a non-linear mapping into a higher dimensional space. In practice, the dual of the above optimization problem is solved by using a kernel function $\kfun$ to bypass the need to explicitly define $\phi$ \cite{cristianini2000introduction}.  We use the Gaussian kernel, 
\begin{equation} \label{eq:rbf_kernel}
    \kfun(x_i,x_j) = 
      \exp{\bigg( - \frac{\| x_i - x_j \|^{2}}{\sigma^{2}} \bigg)}\,,
\end{equation}
where 
$\sigma > 0$ is the bandwidth of the kernel (and is a hyper parameter).

Observe that in \eqref{eq:SVM_unbalanced_hardMargin} there are two separate costs for the positive and negative classes. Unequal costs permit a greater bias towards correctly classifying one class over the other. In particular, having $C^{-} = \infty$ and $\infty > C^{+} > 1$ induces a hard margin classification for the unsafe states and allows for some misclassification for the safe states.  This outcome is captured by the so called \emph{cost matrix} ($M$) of the form
\begin{center}
  {\small
  \begin{tabular}{|c|c|c|}\hline
  \diagbox[width=10em]{\textbf{True}}{\textbf{Estimated}}&
  Safe & Unsafe \\ \hline
  Safe & $0$ & $C^{+}$ \\ \hline
  Unsafe & $C^{-}$ & $0$ \\ \hline
  \end{tabular}}
\end{center}
Each entry $[M]_{ij}$ of the matrix represents the cost of classifying a sample as label $j$ when it truly belongs to label $i$. The penalty for classifying a truly safe (or unsafe) state as safe (or unsafe) is zero.  It is undesirable to classify a truly unsafe state as safe, motivating a high penalty for $C^{-}$. Since safe states being classified as unsafe do not compromise safety, the penalty $C^{+}$ may be smaller. The optimization problem \eqref{eq:SVM_unbalanced_hardMargin} provides compliance (in favor of safety) to measurement errors and noise in the sensor data which can affect the generated decision boundary.  The mixed hard/soft margin classification is what supports the theoretical safety guarantees of the system as discussed in the following subsections.

\subsection{Training Dataset Generation}
This section details the training data generation process suited to binary SVM classification per \eqref{eq:SVM_unbalanced_hardMargin}. A pictorial example for obtaining the training data from a LiDAR sensor at a particular time instant is shown in Fig.~\ref{fig:Dataset_example}.  Below we provide a detailed explanation for generating the dataset.

Generating meaningful data for the kernel SVM from the LiDAR sensor requires converting the scalar variables into world Cartesian coordinates by means of a laser scan transform $g:\mathbb{R}\times\mathcal{D} \rightarrow \mathbb{R}^{2}$, whose main input is the laser scan measurements in polar coordinates and the current robot state (for mapping from the robot frame to the world frame).  Assume that if the sensor detects an unsafe region, then the output from the sensor is a finite depth reading, else it is infinite.  In particular, given a measurement vector $z_t = \begin{bmatrix} z_{t}^{1} & z_{t}^{2} & \ldots & z_{t}^{N} \end{bmatrix}^{T} \in \mathbb{R}^{N}$ at time $t$ with $N$ samples, define $\mathcal{F} \subset \mathcal{I} = \{1, \dots, N\}$ to be the index set of the finite scan measurements.  Define $\mathcal{O}^{-} = \bigcup_{i \in \mathcal{F}} \{ g(z_{t}^{i}; x_t) \}$ as the set of unsafe samples. $\mathcal{O}^{-}$ represents points on the boundary of the unsafe set detected by the sensor which is used to populate a dataset of negative labeled samples $\mathcal{T}^{-} = \bigcup_{i\in \mathcal{F}}\{(g(z_{t}^{i}; x_t), -1)\}$.  To obtain the positive samples from the environment, each $g(z_{t}^{i}; x_t) \in \mathcal{O}^{-}$ is projected radially backwards along the line segment joining the state of the robot $x(t)$ and the point $g(z_{t}^{i}; x_t)$, by a finite distance $d \in \mathbb{R}_{> 0}$.  Define
\begin{equation}
    \label{eq:pos_samples}
    \widehat{z}_{t}^{i} = g(z_{t}^{i} - d; x_t) \in \mathbb{R}^{2}
\end{equation}
for all $i \in \{1,2,\ldots,N\}$ where $d > 0$ is the finite offset
distance.  
Define the set of positive samples as $\mathcal{O}^{+} =
\bigcup_{i\in\mathcal{F}}\{\widehat{z}_{t}^{i}\}$, with the dataset for
positive labeled samples constructed as $\mathcal{T}^{+} =
\bigcup_{i\in\mathcal{F}}\{ (\widehat{z}_{t}^{i}, +1) \}$. 
Collecting the set of positive and negative labeled samples generates
the training dataset $\mathcal{T} = \mathcal{T}^{+} \cup \mathcal{T}^{-}$. 
The training dataset $\mathcal{T}$ contains all unsafe samples and
corresponding safe samples for training the SVM classifier. The
procedure is summarized in Algorithm~\ref{algo:training_dataset}.

\begin{figure}
    \centering
    \includegraphics[width=0.9\columnwidth]{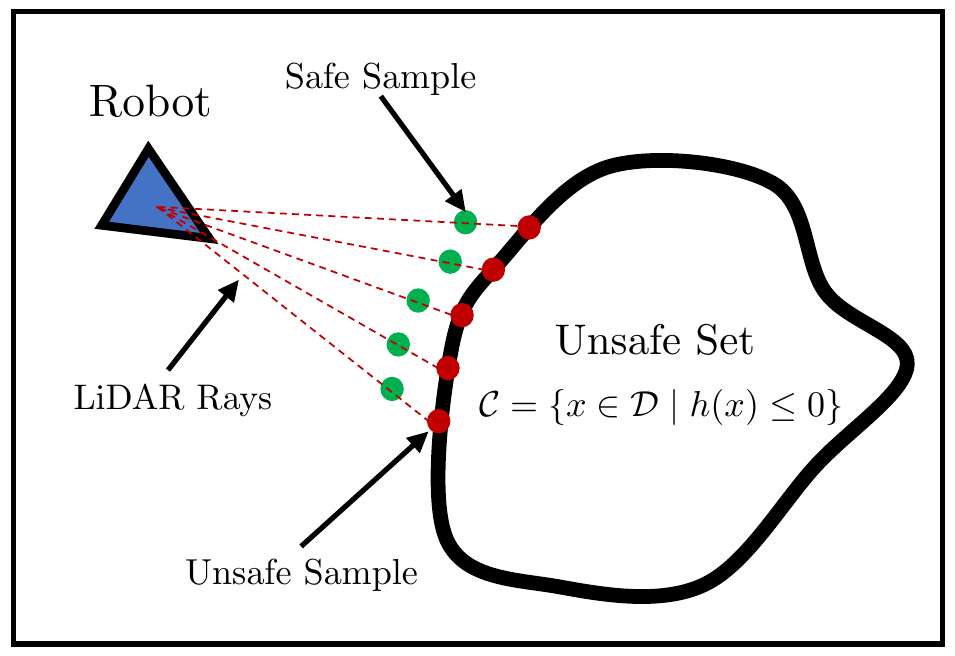}
    \caption{A particular instantiation of a training dataset obtained from measurements from a LiDAR sensor for a given unsafe set. The red points indicate unsafe samples which represent the boundary of the unsafe set whereas the green points indicate the safe samples obtained by the transformation dictated by equation \eqref{eq:pos_samples}. The red dashed lines indicate the LiDAR rays emanating from the sensor onboard the robot.}
    \label{fig:Dataset_example}
\end{figure}

\begin{algorithm}
\DontPrintSemicolon
\KwIn{Laser Scan Measurement $z_{t}$ and Robot State $x_t$}
\KwOut{Training Dataset $\mathcal{T}$}
\SetKwBlock{Begin}{function}{end function}
\Begin($\text{TrainingDataGenerator} {(}z_{t}{)}$)
{
Identify $\mathcal{F} \subset \{1, \dots, N\}$\;
$\mathcal{T}^{-} = \bigcup\limits_{i\in\mathcal{F}} \{ (g(z_{t}^{i}; x_t), -1) \}$ \;
$\widehat{z}_{t}^{i} = g(z_{t}^{i} - d; x_t)$, $\forall i \in \mathcal{F}$ \;
$\mathcal{T}^{+} = \bigcup\limits_{i\in\mathcal{F}} \{ (\widehat{z}_{t}^{i}, +1) \}$ \;
$\mathcal{T} \gets \mathcal{T}^{-} \cup \mathcal{T}^{+}$ \;
  \Return{$\mathcal{T}$}
}
\caption{Training Dataset Generator}
\label{algo:training_dataset}
\end{algorithm}

\subsection{Barrier Function Synthesis with Kernel-SVMs}
To improve the ability to capture unsafe region boundaries, the point data is
transformed by a fixed set of Gaussian kernels of the
form~\eqref{eq:rbf_kernel} using a sparse set of grid points over the domain
$\mathcal{D}$.  This provides a first kernel machine layer that behaves
like an approximate Hilbert space occupancy map \cite{RaOt_HilbertMaps}
and roughly captures the different safe and unsafe regions of the state space. Passing the vector output of
this Hilbert space to the kernel SVM generates a second layer 
that can refine the boundary to better separate the safe and unsafe regions.
The solution to the hard/soft margin kernel SVM in 
\eqref{eq:SVM_unbalanced_hardMargin}  defines the parameters for a
non-linear decision boundary separating the training data (the output
layer of the full classification network).  
Evaluating the two-layer classifier model for $x \in \mathcal{D}$ outputs a
posterior probability describing the likelihood that the sample $x \in
\mathcal{D}$ belongs to a particular class i.e., safe or unsafe. The
posterior probabilities obtained from the model are then converted into
margin scores which define a signed level-set function and provide the
barrier function we seek. The barrier function approximator is thus a two
hidden layer Gaussian kernel neural network.  
This entire procedure is summarized in
Algorithm~\ref{algo:SVM_barrier_synthesis}. By virtue of the methodology
used to generate the training data in
Algorithm~\ref{algo:training_dataset}, and the biased-penalty hard
margin SVM optimization problem \eqref{eq:SVM_unbalanced_hardMargin}, 
the synthesized barrier function correctly classifies the unsafe
samples. This is formalized in Proposition~\ref{prop:correct_predict}.

\begin{prop}
\label{prop:correct_predict}
Given a training dataset $\mathcal{T}$ obtained from Algorithm~\ref{algo:training_dataset}, if Algorithm~\ref{algo:SVM_barrier_synthesis} is used to synthesize the barrier function $\widehat{h}$, then the unsafe samples $\mathcal{O}^{-}$ are such that $\widehat{h}(x) < 0$ for all $x \in \mathcal{O}^{-}$.
\end{prop}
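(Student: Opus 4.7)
The plan is to exploit the hard-margin structure imposed on the negative (unsafe) samples by the choice $C^{-} = \infty$ in the biased-penalty SVM \eqref{eq:SVM_unbalanced_hardMargin}, and then track the sign of the resulting decision function through the two-layer classifier used to define $\widehat{h}$.

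First, I would argue that at any optimal solution of \eqref{eq:SVM_unbalanced_hardMargin} with $C^{-} = \infty$, every slack variable $\xi_{j}$ associated with a negative sample must equal zero: any strictly positive $\xi_{j}$ would contribute infinitely to the objective, while setting $w=0$, $b=0$, and $\xi = 0$ already yields a finite cost. Therefore the hard-margin constraint $y_{j}\,(w\,\phi(x_{j}) + b) \geq 1$ holds with $\xi_{j}=0$ for every $j$ with $y_{j}=-1$. Since by the training data construction in Algorithm~\ref{algo:training_dataset} each $x \in \mathcal{O}^{-}$ appears in $\mathcal{T}^{-}$ with label $y=-1$, this gives
\begin{equation*}
  w\,\phi(x) + b \;\leq\; -1 \quad\text{for all } x \in \mathcal{O}^{-}.
\end{equation*}
The same inequality transfers verbatim to the kernel formulation solved in practice, since the primal and dual optima coincide and the Gaussian kernel realizes $\phi$ implicitly.

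Next, I would trace the sign through Algorithm~\ref{algo:SVM_barrier_synthesis}. The first Hilbert-map layer is a fixed feature map, so it simply reshapes the input $x \in \mathcal{D}$ into the feature vector on which the kernel SVM is trained; the negative-label hard-margin conclusion above therefore already applies at the input of the SVM layer. The SVM decision function $D(x) := w\,\phi(x) + b$ satisfies $D(x)\leq -1$ on $\mathcal{O}^{-}$. The conversion to a posterior probability (e.g.\ via a monotone sigmoidal calibration of $D(x)$) and then to a signed margin score is, by construction of a signed level-set classifier, strictly sign-preserving with the convention that the positive class corresponds to the positive level-set. Consequently $\widehat{h}(x)$ inherits the sign of $D(x)$, so $\widehat{h}(x) < 0$ for every $x \in \mathcal{O}^{-}$.

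The only real subtlety I anticipate is the last step, namely justifying that the posterior-to-margin-score transformation used to define $\widehat{h}$ is monotone and zero-crossing-preserving with respect to the raw SVM output. This is the point where the construction of the barrier function as a signed level-set (rather than, say, a raw probability in $[0,1]$) is essential; I would invoke the standard convention that $\widehat{h}(x)$ is defined so that $\widehat{h}(x) \geq 0$ iff the classifier predicts the safe label, making the sign match between $D(x)$ and $\widehat{h}(x)$ immediate. Combining this with the hard-margin argument yields the claim.
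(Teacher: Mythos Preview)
Your overall strategy matches the paper's: both arguments rest on the hard-margin structure imposed on the negative samples by $C^{-}=\infty$ in \eqref{eq:SVM_unbalanced_hardMargin}, so that at an optimum every unsafe training point must lie strictly on the negative side of the decision surface, and hence $\widehat{h}(x)<0$ on $\mathcal{O}^{-}$. Your treatment of the sign-preservation through the posterior-to-margin conversion is more careful than the paper's one-line version.

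There is, however, a concrete gap in your feasibility step. You write that ``setting $w=0$, $b=0$, and $\xi=0$ already yields a finite cost,'' but this triple is \emph{infeasible}: with $w=0$ and $b=0$ the constraint $y_{i}(w\,\phi(x_{i})+b)\geq 1-\xi_{i}$ forces $\xi_{i}\geq 1$ for every $i$, and for any negative sample this again produces an infinite objective. What you actually need is the existence of \emph{some} $(w,b)$ in the RBF feature space that separates the (finitely many, distinct) training points with margin at least one on the negative side; only then does the hard-margin argument go through and yield $\xi_{j}=0$ for all $j$ with $y_{j}=-1$. This separability is exactly what the paper supplies by invoking the universal approximation property of RBF kernels (Theorem~2 of \cite{hammer2003_svm_approx}); equivalently, one can argue via strict positive definiteness of the Gaussian kernel that any finite set of distinct points is linearly separable in the induced feature space. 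Once you plug in that missing ingredient, your argument is complete and in fact more explicit than the paper's.
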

\begin{proof}
By the method presented in Algorithm~\ref{algo:training_dataset} to generate the training dataset $\mathcal{T}$, we have that the set $\mathcal{O}^{-}$ consists of points on the boundary of the unsafe set. From the kernel-SVM approach used in Algorithm~\ref{algo:SVM_barrier_synthesis}, a function $\widehat{h}$ is generated which classifies the \emph{safe} and \emph{unsafe} samples. Since the optimization problem \eqref{eq:SVM_unbalanced_hardMargin} is a hard margin SVM for the unsafe samples and RBF kernels have universal function approximation capabilities (Theorem 2, \cite{hammer2003_svm_approx}), we can guarantee that $\widehat{h}(x) < 0$ for all $x \in \mathcal{O}^{-}$ and thus the proposition follows.
\end{proof}

\begin{algorithm}
\DontPrintSemicolon
\KwIn{Training Dataset $\mathcal{T}$}
\KwOut{Estimated Barrier Function $\widehat{h}$}
\SetKwBlock{Begin}{function}{end function}
\Begin($\text{BarrierEstimator} {(}\mathcal{T}{)}$)
{
$\mathcal{T}_{HS} \gets $ Map samples in $\mathcal{T}$ to Hilbert space \;
${Cl} \gets $ Train kernel SVM classifier
\eqref{eq:SVM_unbalanced_hardMargin} using $\mathcal{T}_{HS}$ \;
$\widehat{h} \gets $ Recover signed distance function from ${Cl}$ and
first Gaussian kernel layer mapping\;
  \Return{$\widehat{h}$}
}
\caption{Kernel-SVM based Barrier Function Synthesis}
\label{algo:SVM_barrier_synthesis}
\end{algorithm}

\subsection{Offline Barrier Function Synthesis \& Control}
Here, we discuss the \textit{offline} approach to CBF synthesis using 
Algorithm~\ref{algo:training_dataset} and Algorithm~\ref{algo:SVM_barrier_synthesis}. Per the problem setup in Section~\ref{sec:prob_stat}, we consider 
the workspace consisting of $p$ unsafe regions characterized by ZCBFs
$h_{i}$, $i \in \{ 1,2,\ldots,p\}$. We assume that there exists
an \textit{oracle} which provides a set of unsafe samples corresponding
to the boundary of each unsafe sets $i \in \{1,2,\ldots,p \}$ in the
state space by means of a LiDAR sensor which are dense enough to cover
the true boundary of the obstacles. For example, this oracle can be a
``mapping'' robot that navigates the domain and gathers data about the
safe and unsafe regions.

Once Algorithm~\ref{algo:training_dataset} generates the requisite
training data using the \textit{oracle}, executing
Algorithm~\ref{algo:SVM_barrier_synthesis} leads to a ZBF estimate. 
Note that a single ZCBF $\widehat{h} \in C^1(\mathcal{D};\mathbb{R})$, 
whose zero level-set captures the boundaries between safe and unsafe regions, 
is obtained as opposed to $p$ different ZCBFs characterizing the unsafe sets. 
With the synthesized barrier function $\widehat{h}$, we then implement a
QP controller with~\eqref{eq:zcbf_controls} as the constraint. 
Capturing all the unsafe sets with a single function means that the
QP involves only one constraint which reduces the computational
complexity involved in computing the control input. The QP is solved,
and the control is applied, until the system completes the specified
task associated to the nominal controller.  The entire offline barrier
function synthesis and control methodology is formalized in
Algorithm~\ref{algo:offline_SVM}.  In the algorithm, the initial loop
from $t = 0$ to $t = T$ where $T < \infty$, indicates the time period
when the training data is gathered for generating the barrier function. 

\begin{algorithm}
\DontPrintSemicolon
\KwIn{Nominal controller $k$}
\SetKwBlock{Begin}{function}{end function}
  $\mathcal{T} \gets \emptyset$ \;
  \ForAll{$t \in [0,T]$}
  {
  $z_{t} \gets $ \texttt{LaserScanMeasurement} \;
  $\mathcal{T}_{t} \gets $ \texttt{TrainingDataGenerator}$(z_{t},x_t)$ \;
  $\mathcal{T} \gets \mathcal{T} \cup \mathcal{T}_{t}$ \;
  }
  $\hat{h} \gets $ \texttt{BarrierEstimator} ($\mathcal{T}$) \;
  \While{\texttt{Goal is not reached}}
  {
    Solve the QP:
    \begin{equation*}
    \begin{aligned}
    & \underset{\quad \quad \quad \quad u \in \mathbb{R}^{m}}{\text{$u^{*}(x)$ = argmin}}
    \quad ||u - k(x)||_{2}^{2}\\
    & \text{\quad \quad \quad \quad \quad s.t \quad} L_{f} \widehat{h}(x) + L_{g} \widehat{h}(x) u(x) \geq -\alpha(\widehat{h}(x))
    \end{aligned}
    \end{equation*}
    $u \gets u^{*}(x)$ \;
    Solve \eqref{eq:system}, update state $x(t)$ \;
  }
\caption{Offline SVM-based QP controller}
\label{algo:offline_SVM}
\end{algorithm}


Recall that the increment angle of the LiDAR sensor is given by $\theta_{\text{res}}$. Intuitively, as $\theta_{\text{res}} \rightarrow 0$, the LiDAR sensor captures the true nature of the boundary of the unsafe region. Hence, using Proposition~\ref{prop:correct_predict}, we can guarantee that Algorithm~\ref{algo:SVM_barrier_synthesis} synthesizes a barrier function whose level-sets are over-approximations of the true unsafe regions. That is, denote $\widehat{S} = \{ x \in \mathcal{D} \mid \widehat{h}(x) \leq 0 \}$ where $\widehat{S} : \mathcal{D} \rightarrow \mathbb{R}$ as the unsafe region estimated by Algorithm~\ref{algo:SVM_barrier_synthesis}. Then, we have that $\mathcal{S} \subset \widehat{\mathcal{S}}$, where $\mathcal{S} = \bigcup\limits_{i = 1}^{p} \{ x \in \mathcal{D} \mid h_{i}(x) \leq 0\}$ is the true unsafe region characterized by the unknown barrier functions $h_{i}$ for all $i \in \{ 1,2,\ldots,p\}$. In practice, this statement holds true for high resolution LiDAR sensors. The degree of over-approximation depends on a number of factors which include the distance $d \in \mathbb{R}_{>0}$ with which the positive samples are generated in Algorithm~\ref{algo:training_dataset}. Next, we provide a formal guarantee that Algorithm~\ref{algo:offline_SVM} guarantees safety of the system.

\begin{theorem}
\label{theorem:safety_guarantee}
Suppose $\mathcal{S} \subset \widehat{S}$ and the controller from Algorithm 3 is used. Then given any $x(0) \in \widehat{\mathcal{S}}^{c}$ where $\widehat{\mathcal{S}}^{c} = \{ x \in \mathcal{D} \mid \widehat{h}(x) \geq 0 \}$, the robot trajectory is such that $x(t) \in \widehat{\mathcal{S}}^{c}$ for all $t \geq 0$.
\end{theorem}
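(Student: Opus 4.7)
The plan is to apply Theorem~\ref{theorem:zcbf} directly, with $\widehat{h}$ playing the role of the ZCBF and $\widehat{\mathcal{S}}^c = \{x \in \mathcal{D} \mid \widehat{h}(x) \geq 0\}$ playing the role of the safe set $\mathcal{C}$. First I would observe that $\widehat{h}$ is $C^1$ on $\mathcal{D}$: it is built as a finite sum of compositions of Gaussian radial basis kernels and an affine readout, all of which are $C^\infty$, so the resulting signed margin function inherits smoothness. This verifies the regularity hypothesis needed to treat $\widehat{h}$ as a candidate ZCBF.

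Next I would match the QP constraint in Algorithm~\ref{algo:offline_SVM} with the admissible control set~\eqref{eq:zcbf_controls} instantiated at $\widehat{h}$. The constraint $L_f \widehat{h}(x) + L_g \widehat{h}(x) u \geq -\alpha(\widehat{h}(x))$ is, by definition, the statement $u \in \mathcal{U}(x)$ (with $h$ replaced by $\widehat{h}$). Provided the QP is feasible pointwise, its optimizer $u^*(x)$ lies in $\mathcal{U}(x)$, and the supremum in Definition~\ref{def:zcbf} is attained, so $\widehat{h}$ qualifies as a ZCBF. Applying Theorem~\ref{theorem:zcbf} to $\widehat{h}$ then yields the forward invariance claim: if $x(0) \in \widehat{\mathcal{S}}^c$ then $x(t) \in \widehat{\mathcal{S}}^c$ for all $t \geq 0$. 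The hypothesis $\mathcal{S} \subset \widehat{\mathcal{S}}$ plays no direct role in this invariance argument, but it upgrades the conclusion to genuine safety with respect to the true unsafe region via the containment $\widehat{\mathcal{S}}^c \subset \mathcal{S}^c$.

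The main obstacle I anticipate is the pointwise feasibility of the QP, or equivalently, that $\widehat{h}$ satisfies the supremum inequality in Definition~\ref{def:zcbf} at \emph{every} $x \in \mathcal{D}$. Wherever $L_g \widehat{h}(x) \neq 0$, a sufficiently large $u$ aligned with $(L_g \widehat{h}(x))^T$ makes the inequality hold, but at degenerate points where $L_g \widehat{h}(x) = 0$ one must have $L_f \widehat{h}(x) + \alpha(\widehat{h}(x)) \geq 0$, a condition that couples the geometry of the learned function with the choice of the class-$\mathcal{K}$ function $\alpha$. A clean write-up would either invoke this as an implicit assumption (consistent with the proposition's use of the QP controller) or appeal to continuity of the SVM-derived function together with Proposition~\ref{prop:correct_predict} to argue the condition holds on the region of interest; ensuring continuity of the selected $u^*(x)$, as required by Theorem~\ref{theorem:zcbf}, would follow from standard arguments on parametric quadratic programs once feasibility is secured.
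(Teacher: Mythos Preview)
Your approach is essentially the same as the paper's: both reduce the claim to an application of Theorem~\ref{theorem:zcbf} after noting that the QP constraint is exactly $u \in \mathcal{U}(x)$ for $\widehat{h}$, and both flag continuity of the resulting $u^*(x)$ as the nontrivial ingredient. The only substantive difference is that where you appeal to ``standard arguments on parametric quadratic programs'' for continuity of $u^*$, the paper instead invokes a specific external result (Proposition~8 of \cite{mcbfs}), using quasi-convexity of the cost and constraint together with continuity of the nominal policy $k$; citing that result would tighten your write-up. Your observation that the hypothesis $\mathcal{S} \subset \widehat{\mathcal{S}}$ is not actually used to obtain forward invariance of $\widehat{\mathcal{S}}^c$ (only to interpret that invariance as safety with respect to the true obstacles) is correct and is a point the paper's proof leaves somewhat implicit.
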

\begin{proof}
From Algorithm~\ref{algo:offline_SVM}, the QP enforces the barrier function constraint \eqref{eq:zcbf_controls} with $\widehat{h}$ as the ZCBF. Since the cost function of the QP is quasi-convex in $u$, the constraints are quasi-convex in $u$ and the nominal policy $k$ is continuous, from Proposition 8 in \cite{mcbfs} we have that the generated control $u$ is continuous. Hence from Theorem~\ref{theorem:zcbf} and by assumption $\mathcal{S} \subset \mathcal{\widehat{S}}$, we have that the set $\widehat{\mathcal{S}}^{c} = \{ x \in \mathcal{D} \mid \widehat{h}(x) \geq 0 \}$ is rendered forward invariant. That is, we have that $x(t) \in \widehat{\mathcal{S}}^{c}$ for all $t \geq 0$.
\end{proof}

\begin{figure*}[t]
\centerline{ 
\subcaptionbox{\label{fig2:a}}{\includegraphics[width=\columnwidth, height=0.65\columnwidth]{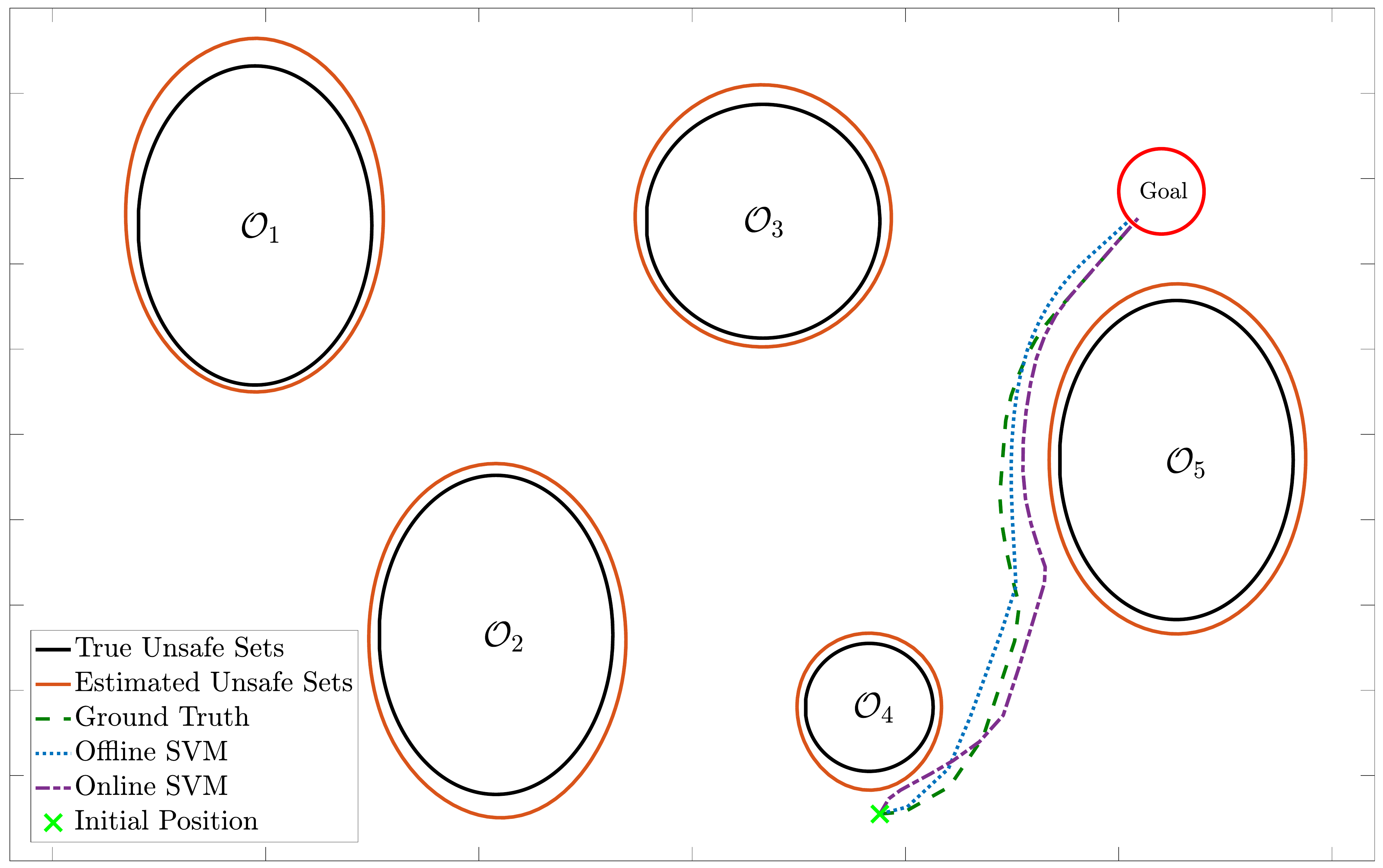}}~
\subcaptionbox{\label{fig2:b}}{\includegraphics[width=\columnwidth, height=0.65\columnwidth]{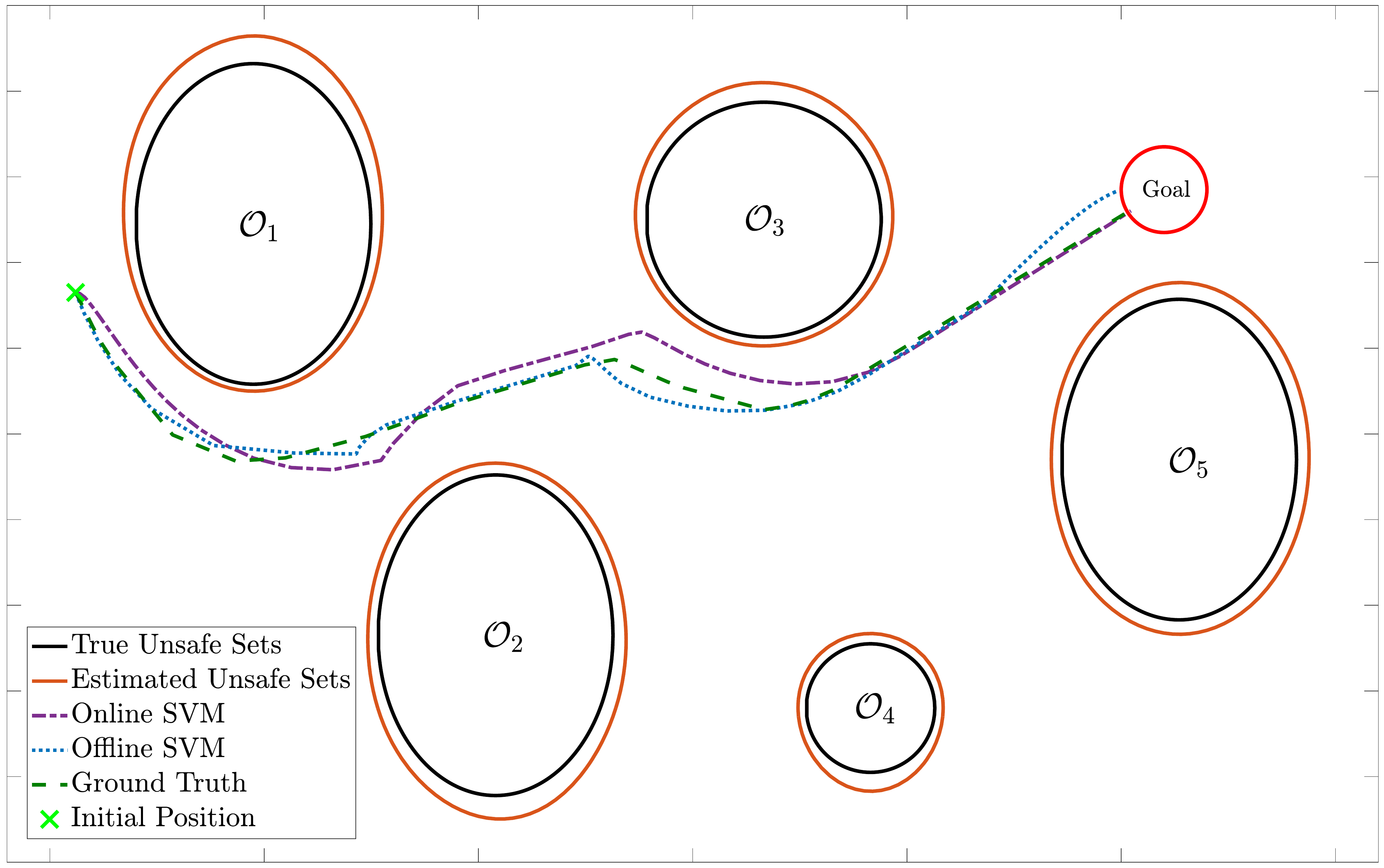}}
}
\caption{Trajectories generated for the robot in a five obstacle scenario. The robot must reach a goal region (red circle) which is known a priori. Three different trajectories are shown- the ground truth trajectory (dashed green), the offline kernel-SVM based controller trajectory (dotted blue), and the online kernel-SVM based controller trajectory (dash-dotted purple). For the initial condition on the left, the trajectories show high correlation values ($R_{\text{offline}} = 0.9992$, $R_{\text{online}} = 0.9777$, $R_{\text{offline-online}} = 0.9734$) and small Fr\'echet distance values ($F_{\text{offline}} = 0.0469$, $F_{\text{online}} = 0.0822$, $F_{\text{offline-online}} = 0.0853$) which indicate that the trajectories are highly similar to the ground truth trajectory. For the figure on the right, the trajectories once again show high correlation values ($R_{\text{offline}} = 0.9627$, $R_{\text{online}} = 0.8085$, $R_{\text{offline-online}} = 0.8946$) and small Fr\'echet distance values ($F_{\text{offline}} = 0.0665$, $F_{\text{online}} = 0.0840$, $F_{\text{offline-online}} = 0.1334$). Observe that estimated unsafe set is an over-approximation of the true unsafe sets, and hence Algorithm~\ref{algo:offline_SVM} guarantees collision free trajectories in the offline case, as per Theorem~\ref{theorem:safety_guarantee}.}
\label{fig:CaseStudy1}
\end{figure*}

\subsection{Online Barrier Function Synthesis \& Control}
\begin{algorithm}
\DontPrintSemicolon
\KwIn{Aggregate Flag $\delta$, Nominal controller $k$}
\SetKwBlock{Begin}{function}{end function}
  $\mathcal{T} \gets \emptyset$ \;
  \While{\texttt{Goal is not reached}}
  {
    $z_{t} \gets $ \texttt{LaserScanMeasurement} \;
    $\mathcal{T}_{t} \gets $ \texttt{TrainingDataGenerator}$(z_{t}$) \;
    \uIf{$\delta = 1$}
    {
    $\mathcal{T} \gets \mathcal{T} \cup \mathcal{T}_{t}$ \;
    }
    \Else
    {
    $\mathcal{T} \gets \mathcal{T}_{t}$ \;
    }
    $\hat{h} \gets $ \texttt{BarrierEstimator} ($\mathcal{T}$) \;
    Solve the QP:
    \begin{equation*}
    \begin{aligned}
    & \underset{\quad \quad \quad \quad u \in \mathbb{R}^{m}}{\text{$u^{*}(x)$ = argmin}}
    \quad ||u - k(x)||_{2}^{2}\\
    & \text{\quad \quad \quad \quad \quad s.t \quad} L_{f} \widehat{h}(x) + L_{g} \widehat{h}(x) u(x) \geq -\alpha(\widehat{h}(x))
    \end{aligned}
    \end{equation*}
    $u \gets u^{*}(x)$ \;
    Solve \eqref{eq:system}, update state $x(t)$ \;
  }
\caption{Online SVM-based QP controller}
\label{algo:online_SVM}
\end{algorithm}

When access to the full set of unsafe samples from the environment is not available, a real-time barrier function synthesis method is preferable. Here, we describe an online approach to synthesizing barrier functions, based on Algorithm~\ref{algo:online_SVM}.  For online ZCBF synthesis, the set of unsafe samples covering the boundary of all the unsafe regions is not known a priori. Hence, at time $t = 0$, the system is initialized with no information regarding the state space, except the nominal feedback control policy. At each time instant $t$, the system obtains the depth measurement $z_t$ and generates the training dataset $\mathcal{T}$ via Algorithm~\ref{algo:training_dataset}.  Then, Algorithm~\ref{algo:SVM_barrier_synthesis} synthesizes a local barrier function. Implementing the QP controller generates the control input at time instant $t$. In the next time instant, the same procedure repeats and a new barrier function is synthesized based on the updated sensor measurements.

Two variations of the online barrier function synthesis method can be implemented. In the first method, the depth sensor data for all previous time instances is deleted, and the QP is solved with only the immediately sensed measurements.  The barrier function approximates the true safe region only locally i.e., in a neighborhood around the state $x_{t}$ of the robot. In the second method, samples from the previous time instant are aggregated with the samples from the current time instant, with Algorithm~\ref{algo:SVM_barrier_synthesis} implemented with the incremented set. The two cases synthesize different barrier function at each time instant. For the data-aggregation case, the estimate of the barrier improves as the number of samples characterizing the unsafe regions increases.  Advantages and drawbacks exist for both approaches.  In the data aggregation case, one needs to continuously update the dataset with new measurements and this exhaustive data collection process can become computationally expensive unless one resorts to efficient ways to store data \cite{dagger}. For the non data aggregation case, computation is faster but the estimate of the barrier function does not improve iteratively as the robot traverses the domain.  Both procedures are described in Algorithm~\ref{algo:online_SVM}.

Define the sensing range of the sensor as $\mathcal{B}_{r}(x) = \{ \overline{x} \in \mathcal{D} \mid \left\| x - \overline{x} \right\| \leq r \}$, where $r \in \mathbb{R}_{> 0}$ is the sensing range of the robot.  Similar to the discussion in the previous subsection, it can be guaranteed that if $\theta_{\text{res}} \rightarrow 0$, then locally, Algorithm~\ref{algo:SVM_barrier_synthesis} synthesizes a barrier function whose level-set over approximates the true unsafe region. That is, denote $\widehat{\mathcal{S}_{r}}(x) = \{ x \in \mathcal{B}_{r}(x) \mid \widehat{h}(x) \leq 0\}$ where $\widehat{h} : \mathcal{D} \rightarrow \mathbb{R}$ is the estimated ZCBF from Algorithm~\ref{algo:SVM_barrier_synthesis}. Then, as $\theta_{\text{res}} \rightarrow 0$, we have that $\mathcal{S}_{r}(x) \subset \widehat{\mathcal{S}}_{r}(x)$ for all $x \in \mathcal{D}$ locally within the ball $\mathcal{B}_{r}(x)$, where $\mathcal{S}_{r}(x) = \bigcup\limits_{i=1}^{p}\{ \overline{x} \in \mathcal{B}_{r}(x) \mid h_{i}(\overline{x}) \leq 0 \}$ is the true unsafe region.
In the online case, a statement similar to Theorem~\ref{theorem:safety_guarantee} cannot be made since the robot does not have access to the full set of samples that characterize the entire boundary of the unsafe set and hence, there is no guarantee that globally in the domain the generated level-sets are over-approximations of the true unsafe regions. However, since the robot dynamics are locally Lipschitz continuous, safety holds locally as seen in Fig~\ref{fig:CaseStudy1}.

\section{EXPERIMENTAL RESULTS}
\label{sec:case_studies}
This section describes and discusses simulation results from a path planning perspective conducted on the ``Simple Two Dimensional Robot (STDR) simulator"\footnote{\url{http://wiki.ros.org/stdr_simulator}}.
Two environments were created for use in STDR.  The first environment contains five ellipsoidal obstacles scattered throughout a 3.2 x 2 workspace domain.  The second environment of the same size contains more general obstacles whose shape cannot be characterized easily by level-sets of closed-form polynomials.  In both cases, the robot has no a priori knowledge of the environment and follows a nominal controller that drives it towards a goal point.  More formally, we consider a robot with dynamics $\dot{x} = u$, where $x \in \mathcal{D} \subset \mathbb{R}^{2}$ is the position of the robot and $u \in \mathbb{R}^{2}$ is the control input. The nominal feedback control policy for all $x \in \mathcal{D}$ is given by $k(x) = \delta \cdot \frac{(x - x_{\text{goal}})}{\mid \| x - x_{\text{goal}} \mid \|}$, where $\delta \in \mathbb{R}_{> 0}$, and $x_{\text{goal}} \in \mathcal{D}$ is a desired final goal position for the robot. Informally, the robot must follow $k(x)$ as close as possible while avoiding the unknown obstacles in the workspace.  The robot must reach a goal region which is defined as $\mathcal{G} = \{ x \in \mathcal{D} \mid \| x - x_{\text{goal}} \| \leq 0.1 \}$. For the first scenario, depicted in Fig.~\ref{fig:CaseStudy1}, we obtain ground truth data using a grid-based solution, which is a common approach to compute the true signed distance to the obstacles. The signed distance function corresponds to the true barrier function characterizing the obstacles.

\subsection{Evaluation Metrics}
Comparison of the trajectory outcomes for the different implementations involves two evaluation metrics: the correlation coefficient ($R$) and the Fr\'echet distance ($F$). These metrics capture both the evolutionary mismatch between trajectories, as well as the Euclidean distance mismatch. The combination of both these metrics provides a means to evaluate the outcomes of the proposed algorithms.

\subsubsection{Correlation Coefficient}
Informally, the correlation coefficient between two trajectories captures the change in one trajectory with respect to the other. That is, one can obtain information regarding the flow of one trajectory with respect to the other. 
Typically, two trajectories are said to be highly correlated if they have a correlation coefficient greater than $0.7$ \cite{bmi_corr}. 
We make use of the correlation coefficient to develop an intuition regarding the nature of the trajectories generated by the offline and online kernel-SVM based approaches compared with the ground truth data.

\subsubsection{Fr\'echet Distance}
Informally, the Fr\'echet distance provides a measure of the Euclidean distance mismatch between two trajectories. While the correlation coefficient provides information regarding the flow of two trajectories, the Fr\'echet distance provides an explicit degree of mismatch between the two. 
A lower Fr\'echet distance indicates less mismatch between the two
trajectories. In particular, $F = 0$ implies that the two trajectories
are identical. 

\subsection{Implementation Results}
\begin{figure}
    \centering
    \includegraphics[width=\columnwidth]{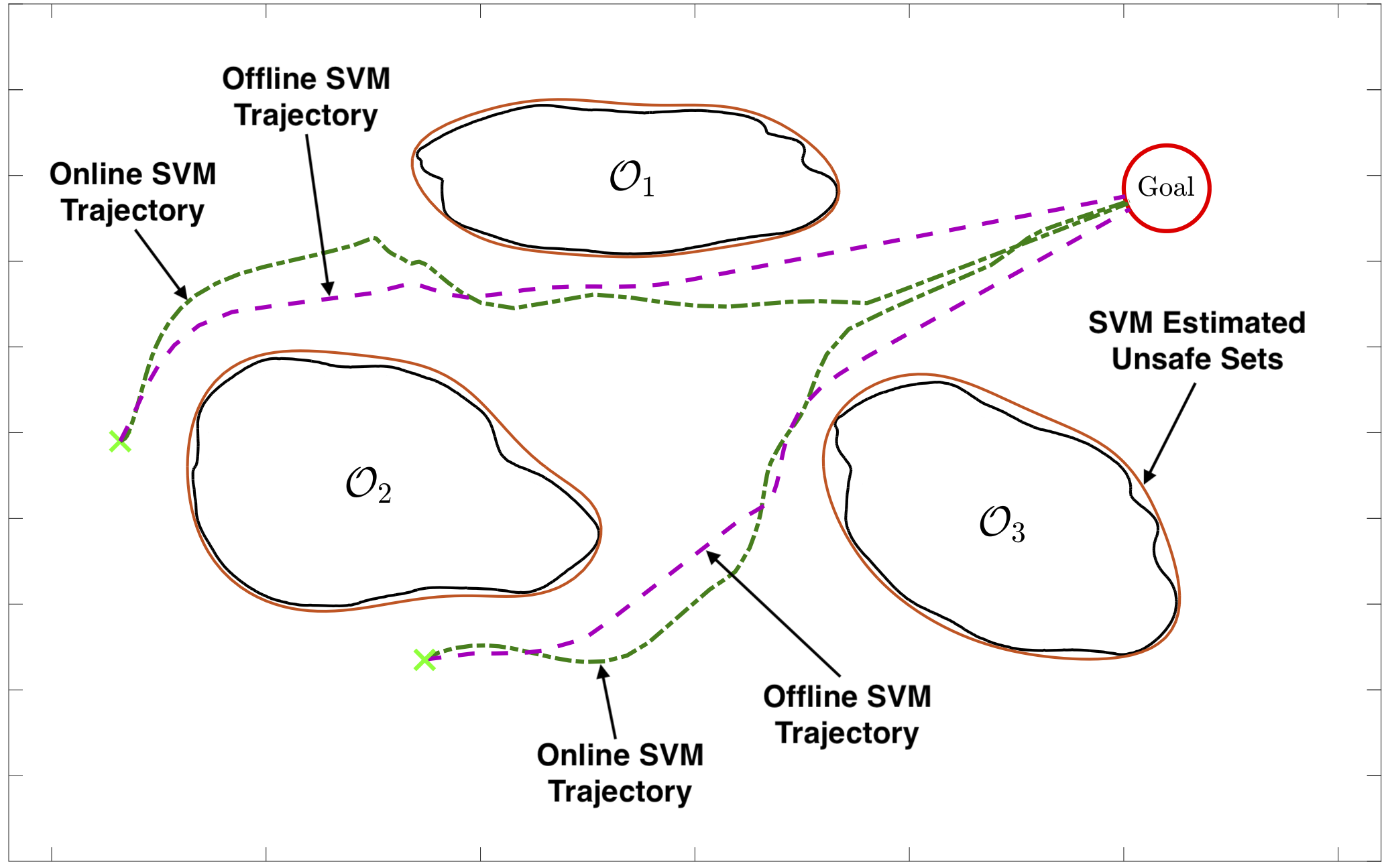}
    \caption{An implementation in the STDR simulator where the robot has to navigate the unknown environment to reach a goal region (red circle). Offline kernel-SVM based controller and online kernel-SVM based controller trajectories for two different initial conditions (green crosses) are shown. The obstacles $\mathcal{O}_{1}$, $\mathcal{O}_{2}$ and $\mathcal{O}_{3}$ are such that they cannot be easily characterized by closed form polynomials, and hence, using the traditional CBF formulation is difficult. However, using Algorithm~\ref{algo:offline_SVM} and Algorithm~\ref{algo:online_SVM}, we can generate trajectories such that the robot remains safe. \label{fig:STDR_squiggle}}
\end{figure}

We first consider the five obstacle scenario shown in
Fig.~\ref{fig:CaseStudy1}. Two different initial conditions for the
robot are considered. Three different trajectories are plotted in each
figure. The green dashed trajectory indicates the ground truth
trajectory obtained when the barrier function for each obstacle is known
a priori. A QP of the form~\eqref{intro_qp} is solved to generate this
trajectory. The blue, dotted trajectory is generated from the offline
kernel-SVM based barrier estimation approach as discussed in
Algorithm~\ref{algo:offline_SVM}. The purple, dash-dotted trajectory is
generated using Algorithm~\ref{algo:online_SVM} which is the online
kernel-SVM based barrier function estimation method. Observe that in
both the cases, the robots avoid the obstacle and follow the nominal
control policy as close as possible. In the second scenario, we consider
a situation where the obstacle shapes are such that finding the closed
form expressions for the barrier functions is not straightforward. This
setting is as shown in Fig.~\ref{fig:STDR_squiggle}. The pink, dashed
trajectories are generated using the offline kernel-SVM based barrier
function approach as discussed in Algorithm~\ref{algo:offline_SVM},
whereas the green, dash-dotted trajectories are generated using the online kernel-SVM based barrier function method described in Algorithm~\ref{algo:online_SVM}. A video of the simulations results is also provided\footnote{\url{https://youtu.be/-XiaR7QchtQ}}.

\addtolength{\textheight}{-6cm}
\subsection{Discussion \& Future Work}

\begin{table}[t]
  \centering
    \caption{Correlation Coefficients for Five Obstacle Scenario (Values close to
    $1$ indicate high correlation) \label{table:correlation_5Obs}}
    \begin{tabular}{c|c|c|c}
      \hline
      \textbf{Case} & \textbf{Offline SVM} & \textbf{Online SVM} & \textbf{Offline SVM}\\
      & \textbf{vs Ground Truth} & \textbf{vs Ground Truth} & \textbf{vs Online SVM} \\
      \hline
         1   & 0.9992 & 0.9777 & 0.9734 \\
         2   & 0.9627 & 0.8085 & 0.8946 \\
         3   & 0.9997 & 0.9709 & 0.9694 \\
         4   & 0.9889 & 0.9466 & 0.9195 \\ 
         5   & 0.9954 & 0.9442 & 0.9447 \\
         6   & 0.9991 & 0.9882 & 0.9870 \\
         7   & 0.9800 & 0.9865 & 0.9811 \\
         8   & 0.9692 & 0.7601 & 0.5886 \\ 
         9   & 0.9880 & 0.8718 & 0.8665 \\
         10   & 0.9997 & 0.9899 & 0.9874 \\
         \textbf{Average}   & \textbf{0.9881} & \textbf{0.9244} & \textbf{0.9112} \\\hline
    \end{tabular}
    \caption{Fr\'echet Distance for Five Obstacle Scenario (Smaller
    values indicate less mismatch between trajectories)\label{table:frechet_5Obs}}
    \begin{tabular}{c|c|c|c}
    \hline
      \textbf{Case} & \textbf{Offline SVM} & \textbf{Online SVM} & \textbf{Offline SVM}\\
      & \textbf{vs Ground Truth} & \textbf{vs Ground Truth} & \textbf{vs Online SVM} \\
      \hline
         1   & 0.0469 & 0.0822 & 0.0853 \\
         2   & 0.0665 & 0.0840 & 0.1334 \\
         3   & 0.0276 & 0.0446 & 0.0468 \\
         4   & 0.0582 & 0.1444 & 0.1232 \\
         5   & 0.0660 & 0.1563 & 0.1341 \\
         6   & 0.0308 & 0.0392 & 0.0266 \\
         7   & 0.1197 & 0.1296 & 0.0479 \\
         8   & 0.0496 & 0.0759 & 0.0652 \\
         9   & 0.0578 & 0.1368 & 0.1119 \\
         10   & 0.0389 & 0.0327 & 0.0369 \\
         \textbf{Average} & \textbf{0.0562} & \textbf{0.0925} & \textbf{0.0811} \\\hline
    \end{tabular}
\end{table}

Table~\ref{table:correlation_5Obs} compares the correlation coefficient for both the online and offline approaches against the ground truth trajectory in the first scenario. In addition, the online method is also compared to the offline case. On average, we obtain correlation coefficient values $> 0.90$, which shows a high similarity between the ground truth trajectory and the barrier estimated trajectory. In particular, note that the average correlation between the offline kernel-SVM approach and the ground truth trajectory is greater then $0.98$. We then provide Fr\'echet distances which measures the degree of mismatch in terms of the Euclidean distance between two 2D trajectories. The smaller the Fr\'echet distance, the smaller the mismatch between the two trajectories. Table~\ref{table:frechet_5Obs} shows the Fr\'echet distances between trajectories for the five obstacle scenario. Observe that on average, we obtain distances $<0.10$ for each case, which shows that the Euclidean distance mismatch between the trajectories is small. A key inference from the above data is that $R_{\text{offline}}$ is very high and $F_{\text{offline}}$ is very small, which shows that the offline kernel-SVM estimated barrier function closely replicates the true barrier functions.

A direction of future research is to extend the proposed SVM-based learning technique for synthesizing CBFs to other sensor models besides LIDAR such as RGB cameras. This could be done by identifying the depth map from a stereo image and then using that to generate the training data and the barrier function.

\section{CONCLUDING REMARKS}
\label{sec:conclusion}
This paper presented a supervised machine learning based approach to
automated synthesis of control barrier functions. A kernel-SVM based
method classifies the set of safe and unsafe samples, and generates the
desired barrier (level-set) function. A formal guarantee on zero
misclassification of unsafe samples is provided along with guarantees on safety of the robot. The proposed framework
was evaluated based on the comparison between the generated trajectories
and ground truth data. Experimental simulations using the proposed
framework were conducted on an omnidirectional robot in a ROS-based
simulator using synthetic LiDAR data.

\section*{Acknowledgement}
The authors thank Alex Chang for discussions regarding SVMs and for
helping with the initial code base.

\bibliographystyle{ieeetr}
\bibliography{bib}

\end{document}